\newtheorem{proposition}{\bf Proposition}
\newcounter{step}
\newlength{\totlinewidth}
\newenvironment{algorithm}{%
  \rule{\linewidth}{1pt}
  \begin{list}{}%
    {\usecounter{step}%
      \settowidth{\labelwidth}{\textbf{Step 2:}}%
      \setlength{\leftmargin}{\labelwidth}%
      \setlength{\topsep}{-2pt}%
      \addtolength{\leftmargin}{\labelsep}%
      \addtolength{\leftmargin}{2mm}%
      \setlength{\rightmargin}{2mm}%
      \setlength{\totlinewidth}{\linewidth}%
      \addtolength{\totlinewidth}{\leftmargin}%
      \addtolength{\totlinewidth}{\rightmargin}%
      \setlength{\parsep}{0mm}%
      \raggedright}}%
  {\end{list}%
  \rule{\linewidth}{1pt}}
\newcounter{substep}
\newlength{\aligntop}
\newlength{\alignbot}
\renewenvironment{align}{%
  \vspace{\aligntop}
  \start@align\@ne\st@rredfalse\m@ne
}{%
  \math@cr \black@\totwidth@
  \egroup
  \ifingather@
    \restorealignstate@
    \egroup
    \nonumber
    \ifnum0=`{\fi\iffalse}\fi
  \else
    $$%
  \fi
  \ignorespacesafterend%
  \vspace{\alignbot}\par\noindent
} \makeatother
\begin{document}
%\clearpage
\title{Meta-Reinforcement Learning for Trajectory Design in Wireless UAV Networks}%Competitive Market for Joint Access and Backhaul Resource Allocation in Satellite-Drone Networks\vspace{-0.26cm}}

% author names and affiliations
% use a multiple column layout for up to three different
% affiliations
%\author{\IEEEauthorblockN{Mingzhe Chen^*, Walid Saad, and Changchuan Yin}
%\IEEEauthorblockA{Beijing Key Laboratory of Network System Architecture and Convergence,\\
%Beijing University of Posts and Telecommunications, Beijing, China 100876\\
%Email: chenmingzhe@bupt.edu.cn, huye@bupt.edu.cn, and ccyin@ieee.org}}
%\author{\authorblockN{ Walid Saad$^1$, Zhu Han$^2$, H. Vincent Poor$^1$, and Tamer Ba\c{s}ar$^{2}$} \authorblockA{\small
%$^{2}$ Coordinated Science Laboratory, University of Illinois at Urbana-Champaign, IL, USA, E-mail: \url{basar1@illinois.edu}\vspace{-1cm}}

%\author{
%\IEEEauthorblockN{Ye Hu,  Mingzhe Chen, and Walid Saad}  
%\IEEEauthorblockA{\small Wireless@VT, Bradley Department of Electrical and Computer Engineering, Virginia Tech, Blacksburg, VA, USA. \\
%Emails: yeh17@vt.edu, nof @vt.edu and walids@vt.edu.}}\vspace{-0.9cm}
\author{
\IEEEauthorblockN{Ye Hu\IEEEauthorrefmark{1},  Mingzhe Chen\IEEEauthorrefmark{2}\IEEEauthorrefmark{3}, Walid Saad\IEEEauthorrefmark{1}, H. Vincent Poor\IEEEauthorrefmark{2}, and Shuguang Cui\IEEEauthorrefmark{3}}  
\IEEEauthorblockA{\IEEEauthorrefmark{1}\small Wireless@VT, Bradley Department of Electrical and Computer Engineering, Virginia Tech, Blacksburg, VA, USA.\\
Email: \{yeh17,walids\}@vt.edu.} 
%\IEEEauthorblockA{\IEEEauthorrefmark{2}\small Department of Electrical Engineering, Princeton University, Princeton, NJ, USA.\\
  %The Future Network of Intelligence Institute, Chinese University of Hong Kong, Shenzhen, China. Email: \protect{mingzhec@princeton.edu}.}
  \IEEEauthorblockA{\IEEEauthorrefmark{2}\small Department of Electrical Engineering, Princeton University, Princeton, NJ, USA.\\
Email:  \{mingzhec, poor\}@princeton.edu.} 
\IEEEauthorblockA{\IEEEauthorrefmark{3}\small The Future Network of Intelligence Institute, Chinese University of Hong Kong, Shenzhen, China.\\
Email: robert.cui@gmail.com.} 
%%\IEEEauthorblockA{\IEEEauthorrefmark{3}\small Tsinghua SEM Advanced ICT Lab, Beijing, China 100084. Email: haom@sem.tsinghua.edu.cn.}
\vspace{-1cm}

\thanks{This work was supported by the U.S. National Science Foundation under Grants CCF-0939370 and CCF-1908308.}}
% \thanks{This research was sponsored by the Army Research Laboratory and was
%accomplished under Grant Number W911NF-17-1-0021. The views and
%conclusions contained in this document are those of the authors and should
%not be interpreted as representing the official policies, either expressed or
%implied, of the Army Research Laboratory or the U.S. Government. The U.S.
%Government is authorized to reproduce and distribute reprints for Government
%purposes notwithstanding any copyright notation herein.}}

% use for special paper notices
%\IEEEspecialpapernotice{(Invited Paper)}

% make the title area

\maketitle

\begin{abstract}
\boldmath
In this paper, the design of an optimal trajectory for an energy-constrained drone operating in dynamic network environments is studied. In the considered model, a drone base station (DBS) is dispatched to provide uplink connectivity to ground users whose demand is dynamic and unpredictable.
In this case, the DBS's trajectory must be adaptively adjusted to satisfy the dynamic user access requests.
 To this end, a meta-learning algorithm is proposed in order to adapt the DBS's trajectory when it encounters novel environments, by tuning a reinforcement learning (RL) solution. The meta-learning algorithm provides a solution that adapts the DBS in novel environments quickly based on limited former experiences.
 The meta-tuned RL is shown to yield a faster convergence to the optimal coverage in unseen environments with a considerably low computation complexity, compared to the baseline policy gradient algorithm. 
 Simulation results show that, the proposed meta-learning solution yields a $25\%$ improvement in the convergence speed, and about $10\%$ improvement in the DBS' communication performance, compared to a baseline policy gradient algorithm. Meanwhile, the probability that the DBS serves over $50\%$ of user requests increases about $27\%$, compared to the baseline policy gradient algorithm.
 
% This joint backhaul and radio user association and sub-channel allocation problem is modeled using a competitive market setting in which all of the communication links are modeled as consumers while the user association outcomes and the wireless sub-channels are modeled as goods. In this competitive market, consumers seek to exchange goods to maximize their profits, in terms of users' data rates. The sum rate of communication links, defined as the social welfare in the competitive market, is maximized in a sequential manner. An optimal satellite frequency sub-channel allocation scheme is first proposed given the sub-channel allocation and user association scheme at DBSs and SBSs. Then, the problems of sub-channel allocation and user association at DBSs and SBSs are solved using a Hungarian-based solution, such that the sum rate of all communication links are maximized. Simulation results show that, .... Meanwhile, .... 
 
 \end{abstract}

% IEEEtran.cls defaults to using nonbold math in the Abstract.
% This preserves the distinction between vectors and scalars. However,
% if the conference you are submitting to favors bold math in the abstract,
% then you can use LaTeX's standard command \boldmath at the very start
% of the abstract to achieve this. Many IEEE journals/conferences frown on
% math in the abstract anyway.

\renewcommand{\thefootnote}{\fnsymbol{footnote}}

%\footnotetext[1]{This research was supported in part by the National Research Foundation for the Doctoral Program of Higher Education of China (No.20120005110007), the NSFC (No. 61271257), and by the U.S. National Science Foundation under Grant 1513697.}
%\thanks{This research was supported by the U.S. National Science Foundation under Grant 1513697.}

% keywords

%\begin{keywords}
%uplink-downlink decoupling, LTE-unlicensed, user association, learning, echo state networks;        
%\end{keywords}
%\renewcommand{\thefootnote}{\fnsymbol{footnote}}

%\footnotetext[1]{}
%This work is supported by the National Natural Science Foundation of China (61271177)and the Fundamental Research Funds for the Central Universities.
% For peer review papers, you can put extra information on the cover
% page as needed:
% \ifCLASSOPTIONpeerreview
% \begin{center} \bfseries EDICS Category: 3-BBND \end{center}
% \fi
%
% For peerreview papers, this IEEEtran command inserts a page break and
% creates the second title. It will be ignored for other modes.
\IEEEpeerreviewmaketitle
 \vspace{-0.06cm}  
\section{Introduction}

Drones can potentially provide a cost-effective, flexible approach to boost the performance of wireless networks by providing services to hotspots, disaster-affected, or rural areas \cite{8755300, mozaffari2016unmanned}. However, effectively deploying drone base stations (DBSs) in a dynamic wireless environment is still challenging. 
In particular, designing a DBS trajectory that allows it to provide timely on-demand service is a major challenge, particularly when the ground users' requests are highly unpredictable.

%On the one hand, the DBSs must predict the user's service requests and adaptively adjust their trajectory to provide on-demand service to the ground users. On the other hand, as the users' service requests are featured as impermanent and fickle, the DBSs must adapt to new environment fast enough to provide timely service. Therefore, it is necessary to study the trajectory design for dynamic networks. 

 %In particular,  the predict potential hotspots is a major challenge for integrated satellite/terrestrial networks,  given the nonlinear coupling between terrestrial links and satellite backhaul links \cite{wei2014key, jaber20165g}. 

The existing literature such as in \cite{7888557, 8531711, 9032206, 9037325} has studied a number of problems related to trajectory design for drone-based systems. The work in \cite{7888557} 
studies the trajectory optimization problem by jointly considering both the drone's communication throughput and energy consumption.
 In\cite{8531711}, the authors design the drone trajectory under practical communication connectivity constraint. The authors in \cite{9032206} propose a trajectory design that enhances physical layer security. The work in \cite{9037325} studies how the three-dimensional (3D) antenna radiation pattern and backhaul constraint affect a drone's 3D trajectory design.
  Despite their promising results, these existing works \cite{7888557, 8531711, 9032206, 9037325} do not consider the design of drone trajectory in a network where the service requests from ground users vary over time. 
 Meanwhile, prior works such as \cite{zhang2018predictive} and \cite{8727504} study the use of machine learning for drone trajectory optimization in dynamic environments. The authors in \cite{zhang2018predictive} use machine learning to deploy a drone to provide on-demand wireless service to cellular users. 
 The authors in \cite{8727504} propose a multi-agent Q-learning-based algorithm to design trajectories for multiple UAVs based on the prediction of users' mobility.
 However, the reinforcement learning (RL)-based solutions in \cite{zhang2018predictive} and \cite{8727504} are mostly over-fitted to certain environments, and, thus, they cannot rapidly cope with new or highly-dynamic environments. In particular, the internal dynamics of the RL in \cite{zhang2018predictive} and \cite{8727504}, such as hyper-parameters, exploration strategies, and loss functions, are manually pre-determined to help the algorithms to design trajectories in specific environments. As such, even the slightest changes in the environment could introduce significant negative influence on the RL performance. In contrast, a meta-learning based trajectory design solution \cite{vanschoren2018meta} that automatically tunes the RL internal dynamics would be more appropriate for dynamic wireless environments. Meta-learning, also known as ``learning to learn'', provides a solution that adapt the DBS to new environments rapidly with only a few training examples.

%More recently, there has been significant interest in studying meta-learning and its engineering application\cite{munir2020multi,8761319,8723067}. In \cite{munir2020multi}, the authors propose a multi-agent meta-reinforcement learning framework for energy management in self-powered wireless network that is reliable against extreme uncertain energy demand. Authors in \cite{8761319} propose a meta-learning based approach to solve novel channel estimation learning tasks using only a small number of pilots. In \cite{8723067}, a meta-learning based framework is proposed to enable adaptive field sensing and reconstruction. In fact, none of these works analyzed the potential adoption of meta-learning algorithm in DBS aided systems. In DBS aided systems, meta-learning introduces  gratifying enhancement on the DBS' time efficiency and effectiveness.

The main contribution of this paper is, thus, a novel meta-learning approach for optimizing the trajectory of a DBS while considering the uncertainty and dynamic of terrestrial users' service request. In particular, we consider a trajectory design problem that enables a DBS to effectively provide on-demand uplink service to users and solve it with a meta-reinforcement learning based algorithm. While prior works such as \cite{park2019meta,8761319,8723067} used meta-learning to study various wireless problems, those works have not considered the DBS trajectory design problem. In contrast, here, we introduce a learning solution tailored to the drone-aided system whose goal is to  provide on time service to ground users.
Unlike previous meta-learning solutions such as \cite{munir2020multi,8761319,8723067}, which aim at performing prediction with less training examples, we propose a novel, lightweight meta-learning approach that adapts the DBS to unseen environments. The proposed meta-learning based solution tunes the hyper-parameters of RL algorithm with online cross validation. An approximation on the hyper-parameter update direction is also proposed to reduce the computational complexity.
%Our key contributions include: 
%
%\begin{itemize}
%\item{We formulate the trajectory design problem of a DBS in an optimization setting, in which the dynamic service request of the ground users and energy limitation of the DBS are jointly considered.}
%%\item{We formulate the trajectory design problem in an optimization setting, which jointly considers the dynamic service request from the ground users and energy limitation of the DBS.}
%\item{We propose a meta-learning based solution that tunes the hyper-parameters of RL algorithm with online cross validation. An approximation on the hyper-parameter update direction is also proposed to reduce the computational complexity.}
%\end{itemize}
%1) a novel framework considers the dynamic on the ground users in the drone-aided system, 2) a optimization based problem formulation jointly considers the dynamic of the ground users and energy limitation of the drone, 3) a lightweight meta-learning based solution with low computation complexity.
Simulation results show that, by tuning the hyper-parameters, the proposed meta-learning solution yields a $25\%$ improvement in the convergence speed, and about $10\%$ improvement in the DBS' communication performance, compared to a vanilla policy gradient algorithm. With the proposed algorithm, the probability that the DBS serves over $50\%$ of user requests increases about $27\%$. The computational complexity of the proposed algorithm is also shown to be similar to that of the vanilla policy gradient algorithm.

The rest of this paper is organized as follows. The system model and problem formulation are described in Section \uppercase\expandafter{\romannumeral2}. In Section \uppercase\expandafter{\romannumeral3}, the proposed algorithm is discussed. In Section \uppercase\expandafter{\romannumeral4}, numerical simulation results are presented. Finally, conclusions are drawn in Section \uppercase\expandafter{\romannumeral5}.

\begin{figure}
\setlength{\belowcaptionskip}{-2pt}
\setlength{\abovecaptionskip}{-2pt} 
  \centering
  \includegraphics[width=6.6cm]{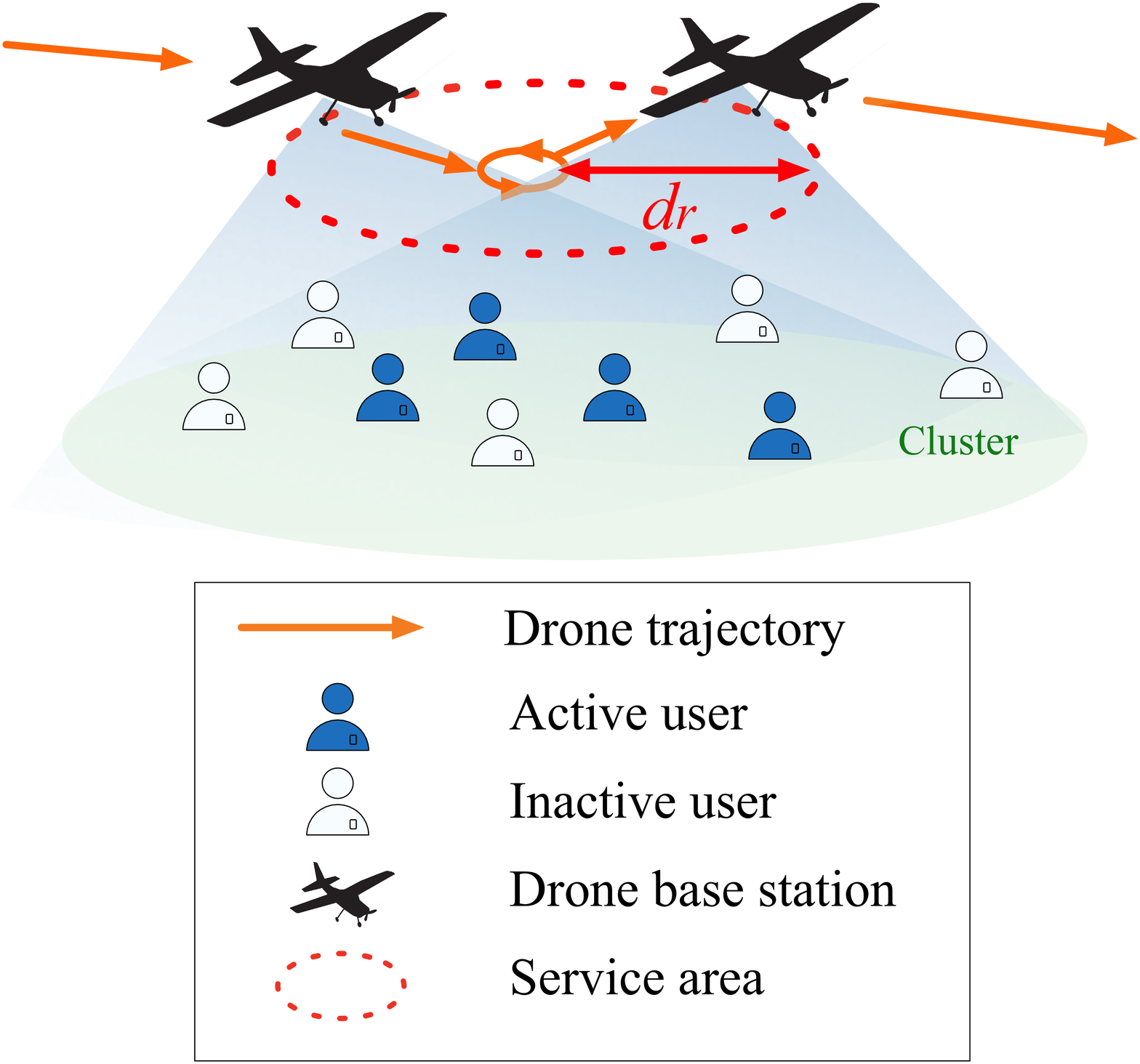}
  \caption{\footnotesize{Network topology.}}
  \label{Fig. 1}
  \centering
  \vspace{-0.5cm}
\end{figure}

\vspace{-0.12cm}

\section{System Model and Problem Formulation}

Consider a geographical area within which a set $\mathcal{U}$ of  $U$ randomly deployed terrestrial users request uplink data services. A fixed-wing DBS is dispatched to satisfy the uplink access requests of the terrestrial users, as shown in Fig. \ref{Fig. 1}. In such an area, a given user that is requesting data services is called an \emph{active user}, otherwise, it is called an \emph{inactive user}. We assume the users to be separated into several groups, each of which is called a \emph{cluster}. The set of all possible clusters is denoted as $\mathcal{L}$. The DBS will travel among the groups in a steady straight-and-level flight (SLF), and it uses a circle to hover over each group,  in a steady circular flight, with constant speed $V$ and constant altitude $H$ \cite{7888557}. Meanwhile, the DBS must return to its original location $O$ within a time period $T$ for battery charging. %considering its propulsion energy consumption during the process. 
The trajectory that traces the DBS's movement within the studied time duration is captured by a vector $\boldsymbol{\xi}=\left[{l}_{1},\cdots,{l}_{K}\right]^\top$, with ${l}_{k}\in\mathcal{L}$ being the $k$-th cluster that the DBS serves and $K$ being the number of served clusters along the trajectory. The set of all possible trajectories of the DBS is given by $\mathcal{E}$. Let $\pi\left(l\left| l_{k},\tau_{k} \right.\right)$ be the probability that the DBS moves toward cluster $l\in\mathcal{L}$ after arriving at $l_{k}$ at time epoch $T-\tau_{k}$, in which $\tau_{k}$ is the remaining time for the DBS to return to the origin, after serving cluster $l_{k}$.

\vspace{-0.12cm}
\subsection{Communication Performance Analysis}
In our network, the users adopt an orthogonal frequency division multiple access (OFDMA) technique and transmit data over a set of uplink \emph{resource blocks} (RBs). The DBS will arbitrarily allocate one RB to each one of its associated users within a cluster. We assume that the DBS can keep serving its associated users within a $d_r$-meter radius over each cluster as shown in Fig.  \ref{Fig. 1}. The area within this range is called a \emph{service area}.
Also, user $u$ is assumed to request a total of $b _u$ data (in bits) at time epoch $t_u$. The DBS must satisfy the user's request within the studied duration ${T}$.  Let $\boldsymbol{b}=\left[b_1,\cdots,b_U\right]$ and $\boldsymbol{t}=\left[t_1,\cdots,t_U\right]$ be, respectively, the vector of quantity and occurrence of the users' access request in the network. Here, the quantity $b _u$ and active time $t_u$ are assumed to be independent random variables that follow unknown distributions. Every possible $\boldsymbol{b}$ and $\boldsymbol{t}$ that could be observed in this network is called one \emph{realization} of the users' access request.
In this model, the communication performance variation caused by the DBS's movement within the service area is considered to be negligible,  as the  variation on the DBS-user distance is considerably small with $d_r$ being much smaller than $H$.
%Given that $d_r$ is considerably a small, compared to the height of the DBS $H$, we ... .
 %Let $\boldsymbol{w}_{n,u,t}=\tiny{\left[{w}_{1,1,t,1},\cdots, {w}_{n,u,t,S}\right]}$ be the resource block allocation vector at time epoch $t$ with ${w}_{n,u,t,s}\in\left\{0, 1\right\}$ indicating whether resource block $s\in\mathcal{S}$ is assigned to link $I_{n,u}$. ${w}_{n,u,t,s}=1$ indicate that resource block $s$ is assigned to link $I_{n,u}$ at time epoch $t$. The set of resource block allocation vectors adopt by all the communication links in the network is denoted as $\mathcal{W}_t=\left[\boldsymbol{w}_{1,1,t},\cdots, \boldsymbol{w}_{N,U,t}\right]$.  %The vector of {resource blocks} BS $n$ allocates to user $u$ is denoted as $\boldsymbol{s}_{n,u}=\left[s_{n,u,1},\cdots,s_{n,u,S}\right]$. %where the uplink resource blocks are equally divided into $S$ groups each of which is referred to as one \emph{resource block}, hereinafter.
The signal-to-noise ratio (SNR) at the link between the DBS and user $u$, will thus be:

 \begin{equation}\label{eq:1}
 \setlength{\abovedisplayskip}{-2 pt}
\begin{split}
&\gamma_{u}=\frac{Ph_{u,\phi_u}}{\sigma^2},%G^r_t\left(\theta^{n,u}_{i}\right),
\end{split}
\end{equation} 
where $P$ represents the transmit power of user $u$, which is assumed to be equal for all users. $h_{u,\phi_u}$ is the path loss between user $u$ and the DBS, with $\phi_u$ is the resource block allocated to user $u$. Here, the Nakagami channel model is applied to characterize a wide range of fading environments for the link between user $u$ and the DBS, such that $h_{u,\phi_u}=\varepsilon d_{u}^{-\alpha}$ with $\varepsilon $ following Gamma distribution with shape parameter $m$, $d_u$ being the distance between user $u$ and the DBS, and $\alpha$ being the path loss exponent. %$h_{u}=g_{u}d_{u}^{-\beta}$ is the path loss between user $u$ and the DBS, with $g_{u}$ being the Rayleigh fading parameter, $d_{u}$ being the distance between user $u$ and the DBS, and $\beta$ being the path loss exponent. $\sigma^2$ is the thermal noise power. 
The data rate at the link between the DBS and user $u$ will then be $c_{u}= {B}\log \left( {1 + \gamma_{u}} \right)$ with $B$ being the RB bandwidth (equal for all RBs). %, $U^i_l$ being the number of users in set $\mathcal{U}^i_l$. 
%The QoS requirements in this network is given by $c_{u,t}\left(\boldsymbol{j}\right)\ge\Gamma_u$, where $\Gamma_u$ is the minimum rate threshold required by user $u$. In other words, the DBS should always guarantee data rates of $\Gamma_u$ at each user $u\in\mathcal{U}$, which gives a clue on why the DBS has to travel around the area.

%Note that, here, $\mathcal{W}=\left[\boldsymbol{w}_{1,1},\cdots, \boldsymbol{w}_{N,U}\right]$ is the set of resource block allocation vectors adopt in the network.
 \vspace{-0.12cm}
\subsection{Utility Function Model}
%% $\boldsymbol{j}=\left[\boldsymbol{l}_1,\cdots,\boldsymbol{l}_{T}\right]$
%as the DBS moving along trajectory $\boldsymbol{j}\in\mathcal{J}$, the data service user $u$ can get, or, user $u$'s available data service within the studied time period is given by:
In the studied network, the transmission delay of user $u\in\mathcal{U}$,  when being served, is $D_{u}=\frac{b_u}{c_{u}}$. In this case, the time that DBS consumes to hover over cluster $l$ is given by: %for a period of $D^*_s=\max_{u\in \mathcal{U}_s} D_{u}-\frac{d_r}{V}$. Suppose user $u$ proposes a access request at time epoch $t_u$ with a probability $p\left(t_u\right)$ ,
 \begin{equation}\label{eq:2}
  \setlength{\abovedisplayskip}{3 pt}
\setlength{\belowdisplayskip}{3 pt}
\begin{split}
 D^*_{l}=\max_{u\in \mathcal{U}_l} D_{u}-\frac{2d_r}{V},
\end{split}
\end{equation}
%where $\mathcal{W}=\left\{\mathcal{W}_1,\cdots,\mathcal{W}_T\right\}$ is set of allocation schemes adopt by the network during the studied time period.
 %where $C_u$ is the amount of data (in bits) user $u$ requests, and it has to be provided to this user within the studied duration. 
 %Suppose user $u$ proposes a access request at time epoch $t_u$, such request is assumed to be satisfied once user $u$ get served. In such a case, the DBS must hover at hotspot $s$ for a period of $D^*_s=\max_{u\in \mathcal{U}_s} D_{u}$.
where $\mathcal{U}_l$ is the set of active users within cluster $l$. Note that, here, $\max_{u\in \mathcal{U}_l} D_{u}$ is the time the DBS consumes within the service area of cluster $l$. $D^*_{l}$ and $\frac{2d_r}{V}$ are the time the DBS consumes for hovering, and SLF, respectively, within the service area. Meanwhile, the DBS has to return to the origin within the considered, fixed time duration $T$, that is $t_{0}\left(\boldsymbol{\xi}\right)=\sum_{k=0}^{K}\frac{d_{l_{k},l_{k+1}}}{V}+\sum_{k=1}^{K}D^*_{l_{k}}\le T$, where $t_{0}$ is the time when the DBS returns to origin $O$ along trajectory $\boldsymbol{\xi}$, with $d_{l_{k},l_{k+1}}$ being the distance between cluster $l_k$ and $l_k+1$. Also, $l_0$ and $l_{K+1}$ represent, respectively, the DBS's original and final locations.
The available time that the DBS can use to return to origin after serving cluster $l_{k}$ on trajectory $\boldsymbol{\xi}$ is $\tau_{k}=T-\sum_{\kappa=0}^{k-1}\frac{d_{l_{\kappa},l_{\kappa+1}}}{V}-\sum_{\kappa=1}^{k}D^*_{l_{\kappa}} $.
 
  However, with one DBS serving the whole area, some users may not have the opportunity to finish their data transmission during the studied period. In such a case, the utility function of the DBS along trajectory $\boldsymbol{\xi}$ is defined as the \emph{service success rate}, which captures the fraction of users being served in the studied duration, and can be given by:

 \begin{equation}\label{eq:3}
  \setlength{\abovedisplayskip}{3 pt}
\setlength{\belowdisplayskip}{3 pt}
\begin{split}
\mu\left(\boldsymbol{\xi}\right)=\frac{\sum_{k=1}^{K}\sum_{u\in\mathcal{U}}\mathds{1}_{\left\{u\in \mathcal{U}_{l_{k}},t_u\le T-\tau_k\right\}}}{\sum_{u\in\mathcal{U}}\mathds{1}_{\left\{0\le t_u\le T\right\}}},
\end{split}
\end{equation} 
%In such a case, we define the utility function of the network with trajectory $j$ as $U\left(\boldsymbol{j}\right)=\sum_{u\in\mathcal{U}}r_u\left(\boldsymbol{j}\right)$. 
where $\mathds{1}_{\left\{x\right\}}=1$ when $x$ is true, otherwise, we have $\mathds{1}_{\left\{x\right\}}=0$. Here, $\sum_{u\in\mathcal{U}}\mathds{1}_{\left\{0\le t_u\le T\right\}}$ is the number of users request data service within the studied time duration, and $\sum_{u\in\mathcal{U}}\mathds{1}_{\left\{u\in \mathcal{U}_{l_{k}},t_u\le T-\tau_k\right\}}$ is the number of served user with cluster $l_k$. %The user request scheme in the network is defined a 
The value of the utility function that can be achieved with policy $\boldsymbol{\pi}$  is given by:
 \begin{equation}\label{eq:4}
  \setlength{\abovedisplayskip}{3 pt}
\setlength{\belowdisplayskip}{3 pt}
\begin{split}
&\overline \mu\left(\boldsymbol{\pi}\right)=\sum_{\boldsymbol{\xi}\in\mathcal{E}}\mu\left(\boldsymbol{\xi}\right)\prod\limits^{K-1}_{k=1}\pi\left(l_{k+1}\left| l_{k},\tau_{k} \right.\right),
\end{split}
\end{equation} 
where $\boldsymbol{\pi}$ defines the probabilities of each of DBS' moving direction at different cluster and time epoch. %$\boldsymbol{p}=\left[ {\begin{array}{*{20}{c}}
\subsection{Problem Formulation}

Our goal is to find an optimal DBS trajectory that can effectively serve all the users. %However, the nonlinear coupling between terrestrial links and satellite backhaul links makes the the optimization intractable. 
Thus, we formulate an optimization problem whose objective is to find the policy that maximizes the expected utility of the network:
\addtocounter{equation}{0}
\begin{equation}\label{opt}
%\begin{split}
 \max_{\boldsymbol{\pi}}\overline \mu\left(\boldsymbol{\pi}\right),
%\end{split}
\end{equation}
\vspace{-0.0cm}
\begin{align}\label{c1}
\setlength{\abovedisplayskip}{-5 pt}
&\;\;\;\;\rm{s.\;t.}\scalebox{1}{$\;\;\;\; t_{0}\left(\boldsymbol{\xi}\right)\le T,  \boldsymbol{\xi}\in \mathcal {E}, $} \tag{\theequation a}\\
&\;\;\;\;\;\;\;\;\scalebox{1}{$\;\;\;\;\;0\le \pi\left(l_{k+1}\left| l_{k},\tau_{k} \right.\right)\le 1, 1\le k\le K-1, $} \tag{\theequation b}
\end{align}
where (\ref{opt}a) means that the DBS must return to the origin before time epoch $T$. From (\ref{opt}), we can see that, the network utility is determined by the DBS's policy, which is, in turn, decided based on the quantity and occurrence of users' access requests. In essence, each realization of user requests needs a novel optimal trajectory design for the DBS.  However, the user requests in the studied network is dynamic and unpredictable. In order to provide best coverage to such users, the DBS must find an algorithm that can effectively guide it in various environments. Here, we notice that the use of optimization algorithms, such as branch and bound or nonlinear programming, is not suitable to solve (\ref{opt}), as the values of $\boldsymbol{b}$ and $\boldsymbol{t}$ follow unknown distributions. Traditional machine learning algorithms \cite{chen2019joint} are also not suitable to solve (\ref{opt}), as most of them
are overfitted to their training tasks, and, thus they are not capable of guiding the DBS in unseen tasks. 
To effectively solve (\ref{opt}), we propose a meta-reinforcement learning (MRL) algorithm that prepares the DBS for novel tasks by tuning an RL solution for each task. 
In particular, the MRL solution adaptively tunes the RL hyper-parameters so as to guide the DBS in unseen wireless environments. This, in turn, can guarantee fast convergence to maximal service success rate. 
The proposed MRL algorithm is introduced in the next section.

\section{Meta Gradient Policy Gradient Meta-Reinforcement Learning}

We now introduce an MRL algorithm, called meta-gradient policy gradient (MGPG), that merges the concept of online cross-validation \cite{xu2018meta}  with the policy gradient (PG) framework. As a classical RL algorithm, the PG algorithm finds the optimal trajectory for the DBS by running gradient descent over the policy space toward the maximal expected utility. However, this process is fitted only to certain environments, which can lead to a poor performance in dynamic networks. To address these challenges, we propose a meta-learning approach that tunes the hyper-parameters of the PG algorithm thus enabling the DBS adapt to the dynamic and unknown environments. %The proposed MRL algorithm infuses historical task information to devise a policy updating scheme with which the DBS can adapt quickly to novel tasks. 
Next, we first introduce the components of the proposed MGPG algorithm. Then, we explain how to use the MGPG algorithm to solve our problem.

\subsection{MGPG Components}

An MGPG algorithm consists of five components: 
\begin{enumerate}
\item {\emph{Agent}: Our agent is the DBS whose goal is to design its trajectory.}
\item {\emph{Actions}: The action of the agent is the cluster it targets at each step, the vector of actions taken by the DBS until step $k$ is denoted $\boldsymbol{a}_k=\left[a_0, a_1,\cdots,a_k\right]$ with $a_k\in\mathcal{L}$ with $a_{0}$ being the first cluster the DBS targets. }
\item {\emph{States}: Each state considers both the agent's location, represented by the cluster it currently serves, and the DBS' energy level, represented by the remaining time for the DBS to return to the origin, and it is given by $\boldsymbol{s}_k=\left[a_{k-1}, \tau_{k}\right]$, 
% \begin{equation}\label{eq:5}
%  \setlength{\abovedisplayskip}{3 pt}
%\setlength{\belowdisplayskip}{3 pt}
%\begin{split}
%\boldsymbol{s}_k=\left[a_{k-1}, \tau_{k}\right],
%\end{split}
%\end{equation} 
where $a_{k-1}$ indicates the DBS' location at step $k$. $\tau_{k}$ is the remaining time for the DBS to return to the origin at step $k$.}% Note that, here, $\boldsymbol{j}\left(\boldsymbol{a}_k\right)=\boldsymbol{a}_k^\top$ is trajectory defined by the actions taken by the DBS until step $k$.}
\item {\emph{Policy}: The policy of the DBS is defined as the probability of choosing a given action at a given state and is denoted by $\pi_{\boldsymbol{\theta}}\left(a_k\left| {\boldsymbol{s}_k} \right.\right)$. The policy is approximated by a deep neural network parametrized by $\boldsymbol{\theta}$. This neural network takes observation on the state of the agent and outputs the probabilities of taking each action at this state. The DBS's policies at all states is denoted in vector $\boldsymbol{\pi}_{\boldsymbol{\theta}}$.} 
%:
% \begin{equation}\label{eq:6}
%  \setlength{\abovedisplayskip}{3 pt}
%\setlength{\belowdisplayskip}{3 pt}
%\begin{split}
%&\pi\left(a_k\left| {\boldsymbol{s}_k} \right.\right)\\&=\left\{ {\begin{array}{*{20}{c}}
%{p_{l_{\boldsymbol{j}\left(\boldsymbol{a}_k\right),k}, a_k},\;\;\textrm{if}~\tau_{\boldsymbol{j}\left(\boldsymbol{a}_k\right),k+1}\ge \frac{d_{O,l_{\boldsymbol{j}\left(\boldsymbol{a}_k\right),k+1}}}{V}}\\
%{\!\!\!\!\!\!\!\!\!\!\!\!\!\!\!\!\!\!\!\! 0,\;\;\textrm{otherwise}}
%\end{array}} \right..
%\end{split}
%\end{equation} 
%Note that, the policy of the DBS is parametrized using a deep neural network with parameter $\boldsymbol{\theta}$.} 
\item {\emph{Reward}: The reward of the DBS measures the benefit of each action. This reward is denoted as $r\left(a_k\left|\boldsymbol{s}_k\right.\right)=\frac{\sum_{u\in\mathcal{U}}\mathds{1}_{\left\{u\in \mathcal{U}_{\boldsymbol{a}_{k}}\right\}}}{\sum_{u\in\mathcal{U}}\mathds{1}_{\left\{0\le t_u\le T\right\}}}$, which is the service success rate the DBS achieves with action ${a}_{k}$ at state $\boldsymbol{s}_k$, such that $\sum_{k=1}^{K} r\left(\boldsymbol{a}_k\left|\boldsymbol{s}_k\right.\right)=\mu\left(\boldsymbol{a}_K\right)$. The average of the accumulated reward is the the expected utility or the objective function the DBS tries to optimize, that is, $\sum_{\boldsymbol{a}_K\in\mathcal{E}}\sum_{k=1}^{K} r\left(\boldsymbol{a}_k\left|\boldsymbol{s}_k\right.\right)\prod\limits^{K}_{k=1}\pi_{\boldsymbol{\theta}}\left(a_k\left| {\boldsymbol{s}_k} \right.\right)=\overline \mu\left(\boldsymbol{\pi_{\boldsymbol{\theta}}}\right)$}.
\end{enumerate}

\subsection{Policy training procedure}
In the proposed MGPG algorithm, the goal of the DBS is to find the optimal policy that maps its states to the actions leading to a maximum service success rate in various environments. The DBS will collect experiences at a realization of the user service request and then update its policy based on the utility value it obtains from these experiences.  In particular, the DBS updates it policy toward the maximized utility, i.e. in the direction to the gradient $\nabla_{\boldsymbol{\theta}}\overline \mu\left(\boldsymbol{\pi_{\boldsymbol{\theta}}}\right)$, which, based on the the policy gradient theorem, is approximated in the form of:
 \begin{equation}\label{eq:policytheorem}
  \setlength{\abovedisplayskip}{3 pt}
\setlength{\belowdisplayskip}{3 pt}
\begin{split}
\mathds{E}_{\boldsymbol{\pi}}\left(\sum_{k=1}^{K} r\left(\boldsymbol{a}_k\left|\boldsymbol{s}_k\right.\right)\left(\sum_{k=1}^{K}\nabla_{\boldsymbol{\theta}}\log \pi_{\boldsymbol{\theta}}\left(a_k\left| {\boldsymbol{s}_k} \right.\right)\right)\right).
\end{split}
\end{equation} 
The DBS, then, tunes its update direction after each policy update. Next, the procedure of the policy training and direction tuning is explained in detail.
%Assume that the set of possible user request scheme is denoted in a set $\mathcal{I}$, each user request scheme $n\in\mathcal{N}$ happens with probability $p_{i}$ and yields a different network environment. 

The DBS is trained on its historical experience under a realization of the users' access requests that follow unknown distribution. The training process is given as: 
\begin{enumerate}
\item { Initialize a policy $\boldsymbol{\pi}_{\boldsymbol{\theta}^{\left(1\right)}}$ with random parameter values $\boldsymbol{\theta}^{\left(1\right)}$ and $\eta^{\left(1\right)}$. Note that, here $\boldsymbol{\theta}^{\left(1\right)}$ and $\eta^{\left(1\right)}$ are, respectively, the policy parameter and hyper-parameter at training episode $1$. }

\item { Within each training episode $i$, the DBS caries out an experience by generating a sequence of actions that are randomly selected based on policy $\boldsymbol{\pi}_{\boldsymbol{\theta}^{\left(i\right)}}$. 
 Each of such procedure generates an experience defined in a vector {\small$\boldsymbol{e}=\left[\boldsymbol{s}_{1},a_{1},r\left(a_{1}\left|\boldsymbol{s}_{1}\right.\right),\cdots, \boldsymbol{s}_{K},a_{K},r\left(a_{K}\left|\boldsymbol{s}_{K}\right.\right)\right]$} with $K$ being the terminal step. Here, $\boldsymbol{s}_{k}$ and $a_{k}$ are, respectively, the state and action at the $k$-th step of experience $\boldsymbol{e}$.}

%switching from one state to another, based on current policy $\boldsymbol{\pi}_{\boldsymbol{\theta}}$, until it returns to the origin. Each of such procedure is called an experience, and is defined in a vector $\boldsymbol{e}=\left[\boldsymbol{s}_1,a_1,r_1,\cdots, \boldsymbol{s}_K,a_K,r_K\right]$ with $K$ being the total time steps of the procedure. The set of all possible experience with current policy and network environment is denoted in set $\mathcal{E}$. 

\item { %DBS observes the rewards of its actions at different states and seeks to optimize its expected cumulative rewards during its stay in the netowrk. 
The discounted future reward, denoted return $G_{k}$, at each step $k$ of the DBS' experience is given by:
 \begin{equation}\label{eq:return}
  \setlength{\abovedisplayskip}{3 pt}
\setlength{\belowdisplayskip}{3 pt}
\begin{split}
G_k\left(\eta^{\left(i\right)}\right)=\sum^{K}_{\kappa=k}\left(\eta^{\left(i\right)}\right)^{\kappa-k}r\left(a_{\kappa}\left|\boldsymbol{s}_{\kappa}\right.\right),
\end{split}
\end{equation} 
where hyper-parameter $\eta^{\left(i\right)}$ is the discount factor of the return function at training episode $i$. }

\item { The DBS then updates its policy with a reinterpreted objective defined as $J\left(\eta^{\left(i\right)}, \boldsymbol{\theta}^{\left(i\right)}\right)=\sum^{K}_{k=1}G_k\left(\eta^{\left(i\right)}\right)\log\pi_{\boldsymbol{\theta}^{\left(i\right)}}\left(a_k\left| {\boldsymbol{s}_k} \right.\right)$, one the sampled experience, in particular:
 \begin{equation}\label{eq:policyupdate}
  \setlength{\abovedisplayskip}{3 pt}
\setlength{\belowdisplayskip}{3 pt}
\begin{split}
\boldsymbol{\theta}^{\left(i+1\right)}=\boldsymbol{\theta}^{\left(i\right)}+\alpha\nabla_{\boldsymbol{\theta}^{\left(i\right)}}J\left(\eta^{\left(i\right)}, \boldsymbol{\theta}^{\left(i\right)}\right),
\end{split}
\end{equation} 
where $\alpha$ is the updating step size. %Note that, here, return $G_t$ is adopt in $J\left(\eta^{\left(i\right)}, \boldsymbol{\theta}^{\left(i\right)}\right)$ to reduce the variance 
}

\item { The DBS caries out an experience $\boldsymbol{e}'$ with the updated policy $\boldsymbol{\pi}_{\boldsymbol{\theta}^{\left(i+1\right)}}$.}
 
\item {The DBS measures current hyper-parameter $\eta^{\left(i\right)}$'s performance on directing its policy to the optimal utility with the updated measurement function $\tilde J\left(\tilde\eta, \boldsymbol{\theta}^{\left(i+1\right)}\right)$ with a fixed hyper-parameter $\tilde\eta$. The update function is given as:

\begin{equation}\label{eq:hyperupdate}
  \setlength{\abovedisplayskip}{3 pt}
\setlength{\belowdisplayskip}{3 pt}
\begin{split}
\eta^{\left(i+1\right)}=\eta^{\left(i\right)}-\beta\nabla_{\eta^{\left(i\right)}}\tilde J\left(\tilde \eta, \boldsymbol{\theta}^{\left(i+1\right)}\right).
\end{split}
\end{equation}  }
\end{enumerate}
Steps 2 through 6 are repeated until training is complete and yields an optimal policy $\boldsymbol{\pi}_{\boldsymbol{\theta}^{*}}$. Note that, instead of performing the expensive deviation calculations $\nabla_{\boldsymbol{\theta}^{\left(i\right)}}J\left(\eta^{\left(i\right)}, \boldsymbol{\theta}^{\left(i\right)}\right)$, $\nabla_{\eta^{\left(i\right)}}\tilde J\left(\tilde \eta, \boldsymbol{\theta}^{\left(i+1\right)}\right)$, the DBS can approximate the update and tuning direction based on Proposition 1.

\begin{proposition}\label{pp1}\emph{
At each iteration $i$ of the policy update procedure, the DBS calculates the update direction using: 
 \begin{equation}\label{eq:step1}
\begin{split}
%V_{S}&\left(\boldsymbol{\rho}, \boldsymbol{\delta}, \boldsymbol{\beta} \right)=\frac{1}{T\Pi_n}\sum\limits_{n\in\mathcal{N}}\sum\limits_{t\in\mathcal{T}}q_{n,S,t}c_{n,S,t}\left(\boldsymbol{\rho}, \boldsymbol{\delta},\boldsymbol{\beta}\right)-\frac{1}{T}\sum\limits_{n\in\mathcal{N}}\sum\limits_{t\in\mathcal{T}}\beta_{nt}.
%V_{S}&\left(\boldsymbol{\rho}, \boldsymbol{\delta}, \boldsymbol{\beta} \right)=\frac{1}{B_{n,S}\left(\boldsymbol{\beta}\right)\Pi_n}\sum\limits_{n\in\mathcal{N}}\sum\limits_{t\in\mathcal{T}}q_{n,S,t}c_{n,S,t}\left(\boldsymbol{\rho}, \boldsymbol{\delta},\boldsymbol{\beta}\right)-\frac{1}{T}\sum\limits_{n\in\mathcal{N}}B_{n,S}\left(\boldsymbol{\beta} \right).
&\nabla_{\boldsymbol{\theta}^{\left(i\right)}}J\left(\eta^{\left(i\right)}, \boldsymbol{\theta}^{\left(i\right)}\right) \\
&= \sum^{K}_{k=1}\sum^{K}_{\kappa=k}\left(\eta^{\left(i\right)}\right)^{\kappa-k}r\left(a_{\kappa}\left|\boldsymbol{s}_{\kappa}\right.\right)\frac{\nabla_{\boldsymbol{\theta}^{\left(i\right)}}\pi_{\boldsymbol{\theta}^{\left(i\right)}}\left(a_k\left| {\boldsymbol{s}_k} \right.\right)}{\pi_{\boldsymbol{\theta}^{\left(i\right)}}\left(a_k\left| {\boldsymbol{s}_k} \right.\right)}.
\end{split}
\end{equation}  
After every policy update, the DBS estimate the hyper-parameter tuning direction $\nabla_{\eta^{\left(i\right)}}\tilde J\left(\tilde \eta, \boldsymbol{\theta}^{\left(i+1\right)}\right)$, with $\tilde \eta$ being $1$, in the form of: 
\begin{equation}\label{eq:step2}
\begin{split}
%V_{S}&\left(\boldsymbol{\rho}, \boldsymbol{\delta}, \boldsymbol{\beta} \right)=\frac{1}{T\Pi_n}\sum\limits_{n\in\mathcal{N}}\sum\limits_{t\in\mathcal{T}}q_{n,S,t}c_{n,S,t}\left(\boldsymbol{\rho}, \boldsymbol{\delta},\boldsymbol{\beta}\right)-\frac{1}{T}\sum\limits_{n\in\mathcal{N}}\sum\limits_{t\in\mathcal{T}}\beta_{nt}.
%V_{S}&\left(\boldsymbol{\rho}, \boldsymbol{\delta}, \boldsymbol{\beta} \right)=\frac{1}{B_{n,S}\left(\boldsymbol{\beta}\right)\Pi_n}\sum\limits_{n\in\mathcal{N}}\sum\limits_{t\in\mathcal{T}}q_{n,S,t}c_{n,S,t}\left(\boldsymbol{\rho}, \boldsymbol{\delta},\boldsymbol{\beta}\right)-\frac{1}{T}\sum\limits_{n\in\mathcal{N}}B_{n,S}\left(\boldsymbol{\beta} \right).
&\nabla_{\eta^{\left(i\right)}}\tilde J\left(\tilde \eta, \boldsymbol{\theta}^{\left(i+1\right)}\right) = \alpha\sum^{K}_{k'=1}\sum^{K}_{k=1}A_{k'}B_{k}x_{k'}y_{k}.
\end{split}
\end{equation} 
where $A_{k'}=\sum^{K}_{\kappa=k'}r\left(a_{\kappa}\left|\boldsymbol{s}_{\kappa}\right.\right)$, $x_{k'}=\frac{\nabla_{\boldsymbol{\theta}^{\left(i+1\right)}}\pi_{\boldsymbol{\theta}^{\left(i+1\right)}}\left(a_k\left| {\boldsymbol{s}_k} \right.\right)}{\pi_{\boldsymbol{\theta}^{\left(i+1\right)}}\left(a_k\left| {\boldsymbol{s}_k} \right.\right)}$, $B_{k}=\sum^{K}_{\kappa=1}\left(\kappa-k\right)\left(\eta^{\left(i\right)}\right)^{\kappa-k-1}r\left(a_{\kappa}\left|\boldsymbol{s}_{\kappa}\right.\right)$, and $y_{k}=\frac{\nabla_{\boldsymbol{\theta}^{\left(i\right)}}\pi_{\boldsymbol{\theta}^{\left(i\right)}}\left(a_k\left| {\boldsymbol{s}_k} \right.\right)}{\pi_{\boldsymbol{\theta}^{\left(i\right)}}\left(a_k\left| {\boldsymbol{s}_k} \right.\right)}$}.
\end{proposition}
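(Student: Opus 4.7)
The plan for Part 1 is essentially a direct application of the log-derivative trick. Because the discounted return $G_k(\eta^{(i)})$ defined in (7) depends only on the realized rewards and the discount factor $\eta^{(i)}$, it is independent of the policy parameters $\boldsymbol{\theta}^{(i)}$. Consequently, when I differentiate $J(\eta^{(i)}, \boldsymbol{\theta}^{(i)}) = \sum_{k=1}^{K} G_k(\eta^{(i)}) \log \pi_{\boldsymbol{\theta}^{(i)}}(a_k|\boldsymbol{s}_k)$ in $\boldsymbol{\theta}^{(i)}$, I can pull each $G_k$ outside the gradient and use $\nabla_{\boldsymbol{\theta}^{(i)}} \log \pi = \nabla_{\boldsymbol{\theta}^{(i)}} \pi / \pi$. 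Inserting the explicit form of $G_k(\eta^{(i)})$ and relabeling the inner index as $\kappa$ delivers (10). This step is a standard REINFORCE-style manipulation and I expect no real difficulty.

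For Part 2, the core tool is the multivariate chain rule applied through the one-step update (8). I would first regard $\boldsymbol{\theta}^{(i+1)}$ as an explicit function of $\eta^{(i)}$ via $\boldsymbol{\theta}^{(i+1)} = \boldsymbol{\theta}^{(i)} + \alpha\, \nabla_{\boldsymbol{\theta}^{(i)}} J(\eta^{(i)}, \boldsymbol{\theta}^{(i)})$, noting that $\boldsymbol{\theta}^{(i)}$ itself is the pre-update value and is therefore treated as a constant with respect to $\eta^{(i)}$. The chain rule then gives
\begin{equation*}
\nabla_{\eta^{(i)}} \tilde J(\tilde\eta, \boldsymbol{\theta}^{(i+1)}) = \bigl(\nabla_{\eta^{(i)}} \boldsymbol{\theta}^{(i+1)}\bigr)^{\!\top} \nabla_{\boldsymbol{\theta}^{(i+1)}} \tilde J(\tilde\eta, \boldsymbol{\theta}^{(i+1)}),
\end{equation*}
and I would compute the two factors in turn.

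For the factor $\nabla_{\eta^{(i)}} \boldsymbol{\theta}^{(i+1)}$, I would differentiate the Part 1 expression (10) in $\eta^{(i)}$. Only the monomial $(\eta^{(i)})^{\kappa-k}$ carries the dependence, so its derivative $(\kappa-k)(\eta^{(i)})^{\kappa-k-1}$, paired with the reward and the score $\nabla_{\boldsymbol{\theta}^{(i)}} \pi / \pi = y_k$, yields (after swapping the order of summation and renaming indices) precisely $\alpha \sum_{k=1}^{K} B_k y_k$. For the factor $\nabla_{\boldsymbol{\theta}^{(i+1)}} \tilde J(\tilde\eta=1, \boldsymbol{\theta}^{(i+1)})$, I would reapply the Part 1 calculation verbatim, only now at the post-update parameter $\boldsymbol{\theta}^{(i+1)}$ and with discount factor $\tilde\eta=1$. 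Setting $\tilde\eta=1$ collapses $G_{k'}(1) = \sum_{\kappa=k'}^{K} r(a_\kappa|\boldsymbol{s}_\kappa) = A_{k'}$, so the expression becomes $\sum_{k'=1}^{K} A_{k'} x_{k'}$.

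Finally, I would multiply the two factors according to the chain rule. The outer sums decouple, so their product is the double sum $\alpha \sum_{k'=1}^{K} \sum_{k=1}^{K} A_{k'} B_k \, x_{k'} y_k$, where $x_{k'} y_k$ is to be read as the inner product of two score vectors, exactly the form claimed in (11). The main obstacle I anticipate is purely bookkeeping: keeping the two independent summation indices $k$ and $k'$ disentangled, remembering that $x_{k'}$ and $y_k$ are score vectors evaluated at different iterates ($\boldsymbol{\theta}^{(i+1)}$ and $\boldsymbol{\theta}^{(i)}$, respectively), and not confusing the two different discount factors $\eta^{(i)}$ and $\tilde\eta$. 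Once the chain rule is set up cleanly, the rest is a mechanical substitution of the Part 1 identity.
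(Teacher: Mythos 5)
Your proposal follows essentially the same route as the paper's proof: Part~1 is the standard log-derivative manipulation with $G_k$ pulled out of the gradient, and Part~2 is the chain rule through the one-step update $\boldsymbol{\theta}^{(i+1)}=\boldsymbol{\theta}^{(i)}+\alpha\nabla_{\boldsymbol{\theta}^{(i)}}J$, with the two factors computed exactly as the paper does (the $\tilde\eta=1$ collapse giving $A_{k'}x_{k'}$ and the differentiation of the monomial $(\eta^{(i)})^{\kappa-k}$ giving $B_k y_k$).

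The one point where you diverge is the treatment of $\frac{d\boldsymbol{\theta}^{(i)}}{d\eta^{(i)}}$. You declare $\boldsymbol{\theta}^{(i)}$ a constant with respect to $\eta^{(i)}$ at the outset, so this term never appears and your final identity is an exact equality. The paper instead retains $\frac{d\boldsymbol{\theta}^{(i)}}{d\eta^{(i)}}$ in the expansion of $\frac{d\boldsymbol{\theta}^{(i+1)}}{d\eta^{(i)}}$ (since $\boldsymbol{\theta}^{(i)}$ is itself the product of earlier $\eta$-dependent updates, as in the meta-gradient literature the paper cites), attaches a decay factor $\mu\in[0,1]$ to it, and only then sets $\mu=0$ as an explicit computational approximation --- which is why the paper's final step reads ``$\approx$'' rather than ``$=$''. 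Your shortcut reaches the same formula, but it silently builds the approximation into the setup rather than exhibiting it as a deliberate truncation of the recursive dependence; if you want your argument to match the claimed statement honestly, you should either justify why the historical dependence of $\boldsymbol{\theta}^{(i)}$ on $\eta$ may be neglected or present the result as an approximation as the paper does.
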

\begin{proof}
See Appendix A.
\end{proof}

As such, using Proposition 1, the DBS can estimate the update and tuning direction simply based on the collected rewards and policy deviation $\frac{\nabla_{\boldsymbol{\theta}}\pi_{\boldsymbol{\theta}}\left(a_k\left| {\boldsymbol{s}_k} \right.\right)}{\pi_{\boldsymbol{\theta}}\left(a_k\left| {\boldsymbol{s}_k} \right.\right)}$. 
The mini-batch training procedure of the MGPG based solution is shown in Algorithm 1. With this mini-batch training procedure, the DBS updates its policy over a whole experience to reduce the variance caused by the action sampling without the need to store a big dataset.
At the beginning of the algorithm, the DBS randomly chooses its policy parameter $\boldsymbol{\theta}^{\left(1\right)}$ and hyper-parameter $\eta^{\left(1\right)}$. The DBS, then, carries out an experience by selecting a trajectory with policy $\pi_{\boldsymbol{\theta}^{\left(1\right)}}$. The DBS, then, flies along the selecting trajectory to serve the users. The service success rate resulted from the selected trajectory is recorded by the DBS. After returning to the origin, the DBS updates its policy parameter based on (\ref{eq:policyupdate}), with the service success rate it obtains from the selected trajectory. This update is then evaluated with (\ref{eq:step2}) on an experience collected with the updated policy. The hyper-parameter is tuned with function (\ref{eq:hyperupdate}) based on such evaluation. This policy training procedure is repeated by the DBS flying around the network until a convergence is reached. In essence, the DBS updates its policy episodically with a formerly generated experience at its origin, and tunes its hyper-parameter online right after each policy update. The complexity of the the MGPG based solution is $\mathcal{O}\left(\left(\upsilon+1\right) n C \right)$, where $\upsilon$ is the iteration at which the convergence is reached, $n$ is the number of elements within policy parameter $\boldsymbol{\theta}$ and $C$ is the time complexity of calculating the gradient of each element (i.e. each element in $\boldsymbol{\theta}$). This complexity is considerably low, as it is close to the one of the vanilla policy gradient algorithm, $\mathcal{O}\left(\upsilon n C \right)$. Other advanced PG solutions such as advantage actor-critic (A2C),  deep deterministic policy gradient (DDPG), however, have a huge time complexity, since each neural network they adopt have a time complexity of $\mathcal{O}\left(\upsilon n^5\right)$. In the studied drone-aided network, the lightweight MGPG algorithm is more desirable and practical, as the computation capacity of a drone is limited.

%In general, compared to these algorithms, the MGPG algorithm provides a lightweight, desirable and practical solution for trajectory design problem in the computation capacity limited drone-aided network.

%as the back propagation on each their implemented neural network yields a time complexity of $\mathcal{O}\left(\upsilon n^5\right)$. In general, compared to these algorithm, the proposed lightweight MGPG algorithm provides a desirable and practical solution for trajectory design problem in the computation capacity limited drone-aided network.
%Note that, no accessary information besides the requested data are transmitted among the DBS and the users. 

%In the inner loop of the proposed algorithm, the DBS is dispatched to find optimal strategies at $W$ different network environments.  At each network, the DBS learn a policy $\boldsymbol{\pi}_{\boldsymbol{\theta}}$ over $M$ experiences.  A mini-batch of $N$ experiences are sampled from the proceeded $M$ experiences. 

\begin{algorithm}[t]\footnotesize
\caption{Proposed MGPG algorithm for trajectory design. }   
\label{table:param}   
\setlength{\abovecaptionskip}{-15pt} 
\setlength{\belowcaptionskip}{-15pt}
\begin{algorithmic} [1] %这个1 表示每一行都显示数字  
\REQUIRE The user locations, time constraints. \\ 
\vspace{2pt}  
\ENSURE Initialize parameter $\boldsymbol{\theta}^{\left(1\right)}$ and $\eta^{\left(1\right)}$. \\%, such that $\Pr \left( {{a_n}} \right)=\Pr \left( {{b_m}} \right)=\frac{1}{2}$ satisfies equation (12).  \\  

\vspace{2pt}  
\FOR {Policy training epoch $i =1:I$} 
\vspace{2pt}  
\STATE Caries out an experience $\boldsymbol{e}$.
%\vspace{2pt}  
%\STATE Calculate $\Pr \left( {{a_n}\left| {{{h}}^x} \right.} \right)$, $\Pr \left( {{b_m}\left| {{{h}}^x} \right.} \right)$, with ${\delta^{2*}_{h^x} }\left( n \right) ={\rho_{h^x} ^{1*}}\left(n \right) = \Pr \left( {{a}_n\left| {{{h}}^x} \right.} \right)$, ${\rho_{h^x} ^{2*}}\left(m \right) ={\delta^{1*}_{h^x}}\left( m \right) = \Pr \left( {b_m\left| {{{h}}^x} \right.} \right)$.
\vspace{2pt}  
\STATE Update policy parameter based on (\ref{eq:step1}).  
\vspace{2pt}  
\STATE Caries out an experience $\boldsymbol{e}'$ with updated policy.
\vspace{2pt}  
\STATE Update hyper-parameter based on (\ref{eq:step2}).  
\vspace{2pt}  
\ENDFOR  
%\REPEAT
%\STATE Step 1 until Step 10. %considering at the t-th iteration the links indicated in At as unassigned links.
%\UNTIL{Convergence to a Walrasian equilibrium}
%\vspace{2pt}
%\REPEAT
%\STATE Step 1 until Step 10 considering at the t-th iteration the links indicated in At as unassigned links. 
%\UNTILL{Convergence}
%\vspace{2pt}  
%\STATE Estimate the optimal satellite backhaul allocation vector $\boldsymbol{\rho}^*_S$ based on ({\ref{eq:15}}). 
%\vspace{2pt}  
%\RETURN{ Optimal allocation vectors  $\boldsymbol{\rho}^*_N$ and $\boldsymbol{\rho}^*_S$}. 
\end{algorithmic}
\end{algorithm} 

%\begin{algorithmic}
%\REPEAT
%\STATE carry out some processing
%\UNTIL{some condition is met}
%\end{algorithmic}
%
% \vspace{-0.12cm}
\section{Simulation Results and Analysis}

For our simulations, we consider a scenario with one DBS serving $U=100$ mobile users. %The channel gain between the soldier and each
%IoBT device follows a Rayleigh distribution with unit variance.
Other parameters used in the simulations are listed in Table \ref{table:param}. The proposed MGPG algorithm results are compared to the vanilla policy gradient algorithm\cite{sutton2000policy}, called policy gradient algorithm hereinafter. All statistical results are averaged over a large number of independent runs.

%%We evaluate the performance of the Hungarian based solution using simulations. We consider an ISDN system with $1500$ uniformly distributed IoBT devices, $1$ macro base station and $N1$ drone base stations. %The channel gain between the soldier and IoBT device follows a Rayleigh distribution with unit variance. %The BER $r\left(\gamma_l\right)$ of the soldier is calculated using the LTE system toolbox in MATLAB \cite {LTESYSTEM}.  %The IoBT network is set up based on ... . 
%%The parameters are listed in Table  \uppercase\expandafter{\romannumeral1}. The results are compared to a random connection scheme. %baseline dynamic Bayesian game, {which is dynamic imperfect information game with only the material payoffs (with no psychology) of the soldier and attacker being considered.} %All statistical results are averaged over a large number of independent runs. Hereinafter, we use the term ``sum-rate'' to refer to the total downlink and uplink rates.
%
%
%
%
%
\begin{table}
 \vspace{-0.32cm}
  \newcommand{\tabincell}[2]{\begin{tabular}{@{}#1@{}}#2\end{tabular}}
\renewcommand\arraystretch{1}
 \caption{
    \vspace*{-0.2em}Simulation Parameters\cite{840210}}\vspace*{-1em}
\centering  
\begin{tabular}{|c|c|c|c|}% ±íÊŸž÷ÁÐÔªËØ¶ÔÆë·œÊœ£¬left-l,right-r,center-c
\hline
\textbf{Parameter} & \textbf{Value} & \textbf{Parameter} & \textbf{Value} \\
\hline
$P_u $ & 20 dBm & $\alpha$ & 2\\
\hline
$\sigma^2$ & -104 dB & $B $ & 20 MHz\\
\hline
$V$ & 30 m/s & $m$ & 3\\
\hline
%$G^t_n\left(0\right)$ & 38 dBi & $G^t_s\left(0\right)$ & 42.1 dBi\\
%\hline

%$ \sigma ^2$ & -95 dBm & $ \\
%\hline
%$B_v$ & 1 GHz & &$h_{\min}$ & 100 m \\
%\hline
%$L$ & 1 Mbit & $P_{\max}$ & 20 W&$\delta_{S_i,n}$ & 5 Mbit/s\\
%\hline
%$\mu_\textrm{LoS}$,$\mu_\textrm{NLoS}$& 2, 2.4 &$N_x, N_s$ &4, 12&$\chi  $ & 15\\
%\hline
%  & &$\zeta_1,\zeta_2$&0.5,0.5\\ 
%\hline
% $\chi _{\sigma_\textrm{NLoS}}$ & 5.27 & $\beta, \eta $ & 2, 100&$X,Y$& 11.9, 0.13\\ 
%\hline
\end{tabular}
 \vspace{-0.15cm}
\end{table}  

\begin{figure}[!t]
  \begin{center}
   \vspace{0cm}
    \includegraphics[width=7.12cm]{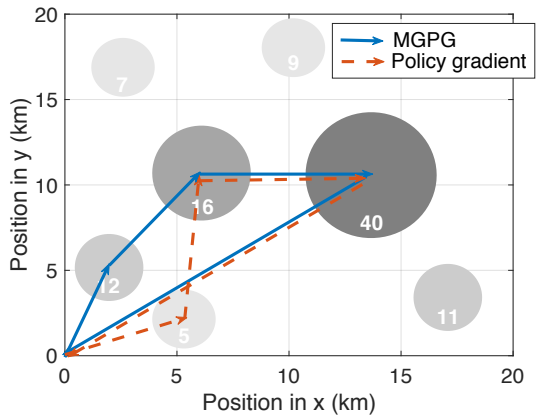}
    \vspace{-0.2cm}
    \caption{\label{fig:rst} Snapshot of trajectories resulting from all considered algorithms in a sample unseen environment.}
  \end{center}
  \vspace{-0.7cm} %这一句就是缩行距的
\end{figure}
Fig. \ref{fig:rst} shows a snapshot of the trajectories resulting from the proposed MGPG and the policy gradient algorithm in an illustrative unseen realization of user requests. The grey circles are the user clusters labeled by the number of active users. In this figure, we can see that, facing an unseen task with its manually pre-defined hyper-parameters, the policy gradient algorithm leads the DBS to only a suboptimal solution. The MGPG algorithm, however, effectively finds the optimal trajectory for the DBS by tuning the hyper-parameters in this unseen environment.

\begin{figure}[!t]
  \begin{center}
   \vspace{0cm}
    \includegraphics[width=7.2cm]{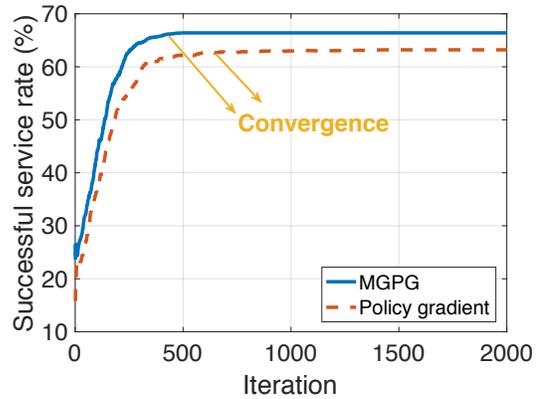}
    \vspace{-0.2cm}
    \caption{\label{fig:cvg} Convergence of all considered algorithms.}
  \end{center}
  \vspace{-0.7cm} %这一句就是缩行距的
\end{figure}

Fig. \ref{fig:cvg} shows the convergence of the proposed MGPG algorithm. From Fig. \ref{fig:cvg}, we observe that the MGPG algorithm requires approximately $450$ iterations to reach convergence, which is $25\%$ less than the number of iterations
required for convergence of the policy gradient algorithm. This is because the update direction of MGPG keeps being tuned in the policy update process, and, thus, it has a smaller angle towards the optimal result. 
than the sub-gradient algorithm. Fig. \ref{fig:cvg} also shows that the proposed MGPG algorithm achieves a service success rate of $68\%$ at its convergence, which is about $10\%$ higher the one reached by the policy gradient baseline. This stems from the
fact that the proposed MGPG algorithm finds the hyper-parameter results in the best performance of the RL algorithm, as the MGPG algorithm keeps tuning the hyper-parameters in policy update process.

\begin{figure}[t]
  \begin{center}
   \vspace{0cm}
    \includegraphics[width=7.2cm]{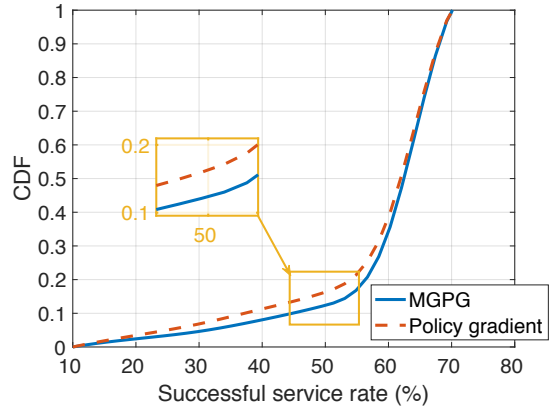}
    \vspace{-0.2cm}
    \caption{\label{fig:CDF}Cumulative distribution function (CDF) in terms of service success rate. }
  \end{center}\vspace{-0.7cm}
\end{figure}

Fig. \ref{fig:CDF} shows the cumulative distribution function (CDF) of the service success rate at the convergence of all independent runs. For example, from Fig. \ref{fig:CDF}, we can see that the proposed algorithm converges to a service success rate lower than $50\%$ within only about $16\%$ of the independent runs.
In Fig. \ref{fig:CDF}, we can see that the proposed algorithm achieves about $27\%$ gain in the frequency of convergence at service success rate over $50\%$, compared to the policy gradient algorithm. This means that the proposed algorithm is more likely to converge to a higher service success rate, that is, it reduces the variance of algorithm performance. The main reason is that the proposed algorithm tunes the hyper-parameter toward a return $G_t$ resulting in the best performance of the DBS. %%Fig. 3 also shows how the CDF changes as the number of users varies. As the number of the users increases, the probability of rates less than 0.01 Mbps in uplink and 0.5Mbps in downlink increases. This stems from the fact that the bandwidth allocated to each user decreases as the number of the users increases.
\section{Conclusion}
In this paper, we have studied a challenging trajectory design problem for UAVs in dynamic and unknown wireless network environments. We formulated an optimization problem that seeks
to maximize the DBS service success rate. To solve this problem, we have developed a novel meta-reinforcement learning based algorithm that enables the DBS to adapt to unseen environments quickly. Simulation results have shown that, by tuning the hyper-parameters in RL, the proposed MGPG solution can yield a $25\%$ improvement in the convergence speed, and an about $10\%$ improvement in the final service success rate, compared to the vanilla policy gradient algorithm. 
Future work can consider various extensions such as the cases with multiple DBSs and other dynamic factors.

%%Due to the reliance of the players' actions on their beliefs, we have used the psychological forward induction to solve this game. The psychological game enables the soldier to determine its actions based on its estimation on the attacker's behavior and belief. Simulation results have shown that, by explicitly intending to frustrate its opponent, the soldier can get a better payoff than the case in which a conventional Bayesian game is used. 
%
%\vspace{-0.12cm}
\section*{Appendix}
 {
\subsection{Proof of Proposition \ref{pp1}}\label{Bp:b}  
\begin{proof} 
{Based on (\ref{eq:return}), the policy update direction at iteration $i$ can be given as:
 \begin{equation}\label{eq:pf1}
\begin{split}
%V_{S}&\left(\boldsymbol{\rho}, \boldsymbol{\delta}, \boldsymbol{\beta} \right)=\frac{1}{T\Pi_n}\sum\limits_{n\in\mathcal{N}}\sum\limits_{t\in\mathcal{T}}q_{n,S,t}c_{n,S,t}\left(\boldsymbol{\rho}, \boldsymbol{\delta},\boldsymbol{\beta}\right)-\frac{1}{T}\sum\limits_{n\in\mathcal{N}}\sum\limits_{t\in\mathcal{T}}\beta_{nt}.
%V_{S}&\left(\boldsymbol{\rho}, \boldsymbol{\delta}, \boldsymbol{\beta} \right)=\frac{1}{B_{n,S}\left(\boldsymbol{\beta}\right)\Pi_n}\sum\limits_{n\in\mathcal{N}}\sum\limits_{t\in\mathcal{T}}q_{n,S,t}c_{n,S,t}\left(\boldsymbol{\rho}, \boldsymbol{\delta},\boldsymbol{\beta}\right)-\frac{1}{T}\sum\limits_{n\in\mathcal{N}}B_{n,S}\left(\boldsymbol{\beta} \right).
&\nabla_{\boldsymbol{\theta}^{\left(i\right)}}J\left(\eta^{\left(i\right)}, \boldsymbol{\theta}^{\left(i\right)}\right) \\
&=\sum^{K}_{k=1}\sum^{K}_{\kappa=k}\left(\eta^{\left(i\right)}\right)^{\kappa-k}r\left(a_{\kappa}\left|\boldsymbol{s}_{\kappa}\right.\right)\nabla_{\boldsymbol{\theta}^{\left(i\right)}}\log\pi_{\boldsymbol{\theta}^{\left(i\right)}}\left(a_k\left| {\boldsymbol{s}_k} \right.\right)\\
&= \sum^{K}_{k=1}\sum^{K}_{\kappa=k}\left(\eta^{\left(i\right)}\right)^{\kappa-k}r\left(a_{\kappa}\left|\boldsymbol{s}_{\kappa}\right.\right)\frac{\nabla_{\boldsymbol{\theta}^{\left(i\right)}}\pi_{\boldsymbol{\theta}^{\left(i\right)}}\left(a_k\left| {\boldsymbol{s}_k} \right.\right)}{\pi_{\boldsymbol{\theta}^{\left(i\right)}}\left(a_k\left| {\boldsymbol{s}_k} \right.\right)}.
\end{split}
\end{equation} 
Also, to tune hyper-parameters $\eta$ in the direction that achieves the best service success rate in the studied network, we measure $\eta$ by cross validating the updated parameter $\boldsymbol{\theta}^{\left(i+1\right)}$ on second trajectory $\boldsymbol{e}'$. In particular, the tuning direction at step $i$ is given in the form of:
 \begin{equation}\label{eq:pf2}
\setlength{\abovedisplayskip}{2 pt}
\setlength{\belowdisplayskip}{2 pt}
\begin{split}
{\nabla_{\eta^{\left(i\right)}}\tilde J\left(\tilde \eta, \boldsymbol{\theta}^{\left(i+1\right)}\right)=\frac{{\partial \tilde J\left(\tilde\eta, \boldsymbol{\theta}^{\left(i+1\right)}\right)}}{{\partial  \boldsymbol{\theta}^{\left(i+1\right)}}}\frac{{d\boldsymbol{\theta}^{\left(i+1\right)}}}{{d\eta^{\left(i\right)}}},}
\end{split}
\end{equation} 
where:
 \begin{equation}\label{eq:pf3}
\begin{split}
{\frac{{\partial \tilde J\left(\tilde\eta, \boldsymbol{\theta}^{\left(i+1\right)}\right)}}{{\partial  \boldsymbol{\theta}^{\left(i+1\right)}}}=\sum^{K}_{k=1}\sum^{K}_{\kappa=k}r\left(a_{\kappa}\left|\boldsymbol{s}_{\kappa}\right.\right)\frac{\nabla_{\boldsymbol{\theta}^{\left(i+1\right)}}\pi_{\boldsymbol{\theta}^{\left(i+1\right)}}\left(a_k\left| {\boldsymbol{s}_k} \right.\right)}{\pi_{\boldsymbol{\theta}^{\left(i+1\right)}}\left(a_k\left| {\boldsymbol{s}_k} \right.\right)},}
\end{split}
\end{equation} 
Meanwhile:
 \begin{equation}\label{eq:pf4}\small
\begin{split}
&\frac{{d\boldsymbol{\theta}^{\left(i+1\right)}}}{{d\eta^{\left(i\right)}}}=\frac{d\left[\boldsymbol{\theta}^{\left(i\right)}+ \alpha\nabla_{\boldsymbol{\theta}^{\left(i\right)}}J\left(\eta^{\left(i\right)}, \boldsymbol{\theta}^{\left(i\right)}\right)\right]}{d\eta^{\left(i\right)}}\\
&=\frac{d\boldsymbol{\theta}^{\left(i\right)}}{d\eta^{\left(i\right)}}+\frac{\alpha d\sum^{K}_{k=1}\!\!\sum^{K}_{\kappa=k}\!\!\left(\eta^{\left(i\right)}\right)^{\kappa-k}r\left(a_{\kappa}\left|\boldsymbol{s}_{\kappa}\right.\right)\frac{\nabla_{\boldsymbol{\theta}^{\left(i\right)}}\pi_{\boldsymbol{\theta}^{\left(i\right)}}\left(a_k\left| {\boldsymbol{s}_k} \right.\right)}{\pi_{\boldsymbol{\theta}^{\left(i\right)}}\left(a_k\left| {\boldsymbol{s}_k} \right.\right)}}{d\eta^{\left(i\right)}}\\
&=\frac{d\boldsymbol{\theta}^{\left(i\right)}}{d\eta^{\left(i\right)}}\!+\!\alpha\!\!\sum^{K}_{k=1}\!\!\sum^{K}_{\kappa=k}\!\!\left(\kappa-k\right)\!\left(\eta^{\left(i\right)}\!\right)^{\!\kappa-k-1\!}\!\!\!r\left(a_{\kappa}\!\left|\boldsymbol{s}_{\kappa}\right.\!\right)\!\frac{\nabla_{\boldsymbol{\theta}^{\left(i\right)}}\pi_{\boldsymbol{\theta}^{\left(i\right)}}\left(a_k\left| {\boldsymbol{s}_k} \right.\right)}{\pi_{\boldsymbol{\theta}^{\left(i\right)}}\left(a_k\left| {\boldsymbol{s}_k} \right.\right)}.
\end{split}
\end{equation} 
Here, we decay the first term in (\ref{eq:pf4}), to effectively tuning $\eta$\cite{817990}. In such a case, the first term is rewritten as $\mu\frac{d\boldsymbol{\theta}^{\left(i\right)}}{d\eta^{\left(i\right)}}$, where $\mu\in\left[0,1\right]$ is the decay factor. In practice, we approximation the first term by setting $\mu=0$ to reduce computation complexity. In essence, we only consider hyper-parameter $\eta$'s effect on a single policy update. In summary, the tuning direction is finally approximated as:
 \begin{equation}\label{eq:pf5}\small
\setlength{\abovedisplayskip}{1 pt}
\setlength{\belowdisplayskip}{1 pt}
\begin{split}
&\nabla_{\eta^{\left(i\right)}}\tilde J\left(\tilde \eta, \boldsymbol{\theta}^{\left(i+1\right)}\right) \approx  \alpha\sum^{K}_{k'=1}\sum^{K}_{k=1}A_{k'}B_{k}x_{k'}y_{k},
\end{split}
\end{equation} 
where $A_{k'}=\sum^{K}_{\kappa=k'}r\left(a_{\kappa}\left|\boldsymbol{s}_{\kappa}\right.\right)$, $x_{k'}=\frac{\nabla_{\boldsymbol{\theta}^{\left(i+1\right)}}\pi_{\boldsymbol{\theta}^{\left(i+1\right)}}\left(a_k\left| {\boldsymbol{s}_k} \right.\right)}{\pi_{\boldsymbol{\theta}^{\left(i+1\right)}}\left(a_k\left| {\boldsymbol{s}_k} \right.\right)}$, $B_{k}=\sum^{K}_{\kappa=1}\left(\kappa-k\right)\left(\eta^{\left(i\right)}\right)^{\kappa-k-1}r\left(a_{\kappa}\left|\boldsymbol{s}_{\kappa}\right.\right)$, and $y_{k}=\frac{\nabla_{\boldsymbol{\theta}^{\left(i\right)}}\pi_{\boldsymbol{\theta}^{\left(i\right)}}\left(a_k\left| {\boldsymbol{s}_k} \right.\right)}{\pi_{\boldsymbol{\theta}^{\left(i\right)}}\left(a_k\left| {\boldsymbol{s}_k} \right.\right)}$. This completes the proof.
}
\end{proof}}
 
\bibliographystyle{IEEEbib}
\def\baselinestretch{1.07}
\bibliography{references}
\end{document}